\documentclass[letterpaper, 10 pt, conference]{ieeeconf}  

\IEEEoverridecommandlockouts                              
\usepackage{times}

\usepackage{multicol}
\usepackage[bookmarks=true]{hyperref}
\usepackage{graphicx, mathtools}

\usepackage{amsmath,amssymb,amsfonts}
\usepackage{algorithm}
\usepackage{algpseudocode}

\usepackage{array}
\usepackage{newtxtext,newtxmath}
\usepackage[caption=false,font=footnotesize]{subfig}
\usepackage{textcomp}
\usepackage{stfloats}
\usepackage{url}
\usepackage{verbatim}
\usepackage{graphicx}
\usepackage{mathtools}
\usepackage{float}
\usepackage{xcolor}
\usepackage{makecell}
\usepackage{booktabs}  
\usepackage{array, caption, capt-of, cuted}     
\usepackage[font=small,labelfont=bf]{caption}

\let\labelindent\relax
\usepackage{enumitem}

\usepackage{cancel}

\usepackage{svg}
\usepackage{tabularx}

\usepackage{multirow}
\usepackage{makecell}
\usepackage{siunitx}
\usepackage{xcolor}

\usepackage{wrapfig}

\usepackage{eso-pic}

\newtheorem{theorem}{Theorem}
\newtheorem{lemma}{Lemma}
\newtheorem{prop}{Proposition}
\newtheorem{remark}{Remark}
\newtheorem{assumption}{Assumption}

\title{\LARGE \bf
Stein Variational Uncertainty-Adaptive Model Predictive Control
}

\author{Hrishikesh Sathyanarayan and Ian Abraham
\thanks{H. Sathyanarayan is with Department of Mechanical Engineering, Yale University, USA
        {\tt\small hrishi.sathyanarayan@yale.edu}}%
\thanks{I. Abraham is with the Department of Electrical Engineering, University of Sydney, Australia
        {\tt\small ian.abraham@sydney.edu.au}}%
}

\begin{document}

\AddToShipoutPictureFG*{%
  \AtPageUpperLeft{%
    \hspace*{0.75in}%
    \raisebox{-0.5in}{%
      {\fontfamily{phv}\fontsize{10}{12}\selectfont\bfseries
      \begin{tabular}{@{}l@{}}
      Conference on Decision and Control 2026\\
      Honolulu, Hawaii, USA, December 15--December 18, 2026
      \end{tabular}}%
    }%
  }%
}

\maketitle
\thispagestyle{empty}
\pagestyle{empty}

\begin{abstract}

We propose a Stein variational distributionally robust controller for nonlinear dynamical systems with latent parametric uncertainty. 
The method is an alternative to conservative worst-case ambiguity-set optimization with a deterministic particle-based approximation of a task-dependent uncertainty distribution, enabling the controller to concentrate on parameter sensitivities that most strongly affect closed-loop performance. 
Our method yields a controller that is robust to latent parameter uncertainty by coupling optimal control with Stein variational inference, and avoiding restrictive parametric assumptions on the uncertainty model while preserving computational parallelism. 
In contrast to classical DRO, which can sacrifice nominal performance through worst-case design, we find our approach achieves robustness by shaping the control law around relevant uncertainties that are most critical to the task objective. 
The proposed framework therefore reconciles robust control and variational inference in a single decision-theoretic formulation for broad classes of control systems with parameter uncertainty. 
We demonstrate our approach on representative control problems that empirically illustrate improved performance-robustness tradeoffs over nominal, ensemble, and classical distributionally robust baselines.

\end{abstract}

\section{INTRODUCTION}

Control of dynamical systems under broad 
uncertainty remains a central challenge in modern control theory.
In many applications, uncertainty arises from latent parameters in the system dynamics (i.e. mass, inertia, or geometry) that cannot be directly measured yet critically influence closed-loop performance.
Classical robust control methods address this challenge by optimizing performance under worst-case parameter uncertainty, yielding performance guarantees at the expense of conservatively robust controllers \cite{kuhn2025distributionallyrobustoptimization,long2025sensorbaseddistributionallyrobustcontrol,liu2025datadrivendistributionallyrobustoptimal}.
In contrast, stochastic and risk-sensitive control methods optimize expected performance subject to a prescribed distribution, but rely heavily on task-agnostic sampling methods that fail to accurately model task-relevant uncertainty in practice \cite{Abraham_2020,lowrey2019planonlinelearnoffline}.
Bridging this gap between robustness and control performance guarantees remains a fundamental open problem.

Distributionally Robust Control (DRO) \cite{kuhn2025distributionallyrobustoptimization} provides a principled approach for reasoning about uncertainty by optimizing over a set of admissible probability distributions, typically defined through divergence-based ambiguity sets. 
While DRO provides theoretically-grounded control formulations to model uncertainty, the method relies heavily on restrictive assumptions: either the uncertainty distribution must be parametrized and estimated a-priori \cite{long2025sensorbaseddistributionallyrobustcontrol,liu2025datadrivendistributionallyrobustoptimal}, or samples must be drawn from a predefined generative process.
Moreover, classical DRO inherits a worst-case uncertainty design formulation that leads to overly conservative controls that subsequently degrade task performance.

\begin{figure}
    \centering
    \includegraphics[width=\linewidth]{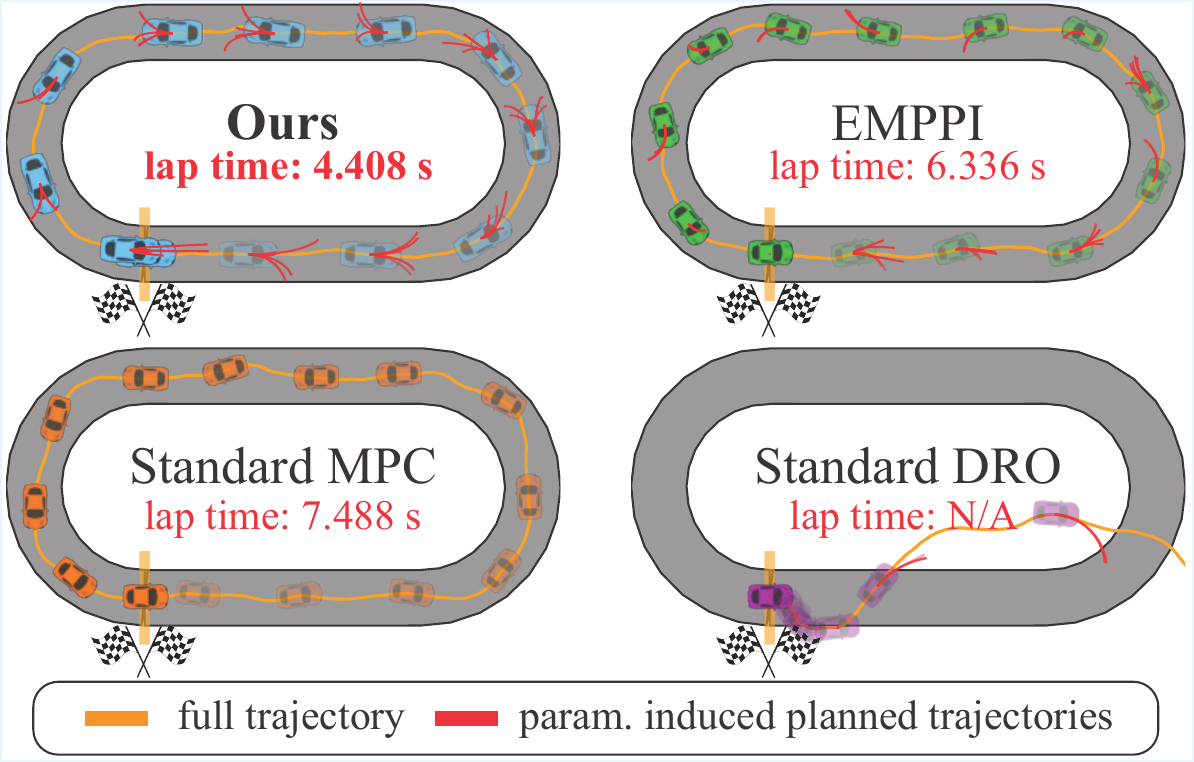}
    \caption{\textbf{Autonomous Racing under Vehicle Inertial Uncertainty. } 
    Here, we apply our control framework to autonomous racing, minimizing lap time under significant uncertainty in the vehicle’s mass distribution (mass and inertia).
    We compare our approach to state-of-the-art baselines: Ensemble Model Predictive Control \cite{Abraham_2020}, classical Distributionally Robust Control \cite{kuhn2025distributionallyrobustoptimization}, and nominal Model Predictive Control. 
    Our approach quickly adapts to \emph{task-sensitive} uncertainties faster than nominal baselines, enabling fast adaptation and convergence to the objective.}
    \label{fig:abstract}
    \vspace{-1.9em}
\end{figure}

A guiding question is how we can relax the assumptions of DRO to avoid worst-case uncertainty modeling.
Variational Inference (VI) methods have emerged as powerful tools that approximate complex probability distributions without committing to restrictive parametric families.
Stein Variational Gradient Descent (SVGD) \cite{liu2019steinvariationalgradientdescent}, in particular, constructs a deterministic flow of particles that approximates a target distribution via functional gradient descent in a reproducing kernel Hilbert space.
Unlike traditional VI, SVGD provides a nonparametric and computationally tractable mechanism to represent multimodal and task-dependent uncertainty.
Despite its success in Bayesian inference and learning \cite{zhuo2018messagepassingsteinvariational,korba2021nonasymptoticanalysissteinvariational}, its role in robust control synthesis remains largely unexplored.

Early work presented in \cite{barcelos2021dualonlinesteinvariational}  introduced Stein Variational Inference (SVI) \cite{lee2024steinvariationalergodicsearch,lambert2021steinvariationalmodelpredictive} for jointly reasoning over controls and dynamics through brute-force parallelization, but fails to capture task-sensitivities that could be used to reduce computation. 
This work shows that Stein variational inference over task-sensitive parameters alone is sufficient to achieve distributional robustness within a model-predictive control paradigm under limited sensor feedback, thereby avoiding parallel inference over controls while yielding reliable, uncertainty-robust control synthesis.

The core idea is to replace static or worst-case uncertainty models with a \emph{deterministic}, evolving set of particles that adapts to uncertainty via a \emph{task-dependent} posterior over parameter uncertainty.
Rather than optimizing over all admissible uncertainties, our method prioritizes uncertainty \emph{most sensitive to task performance}, concentrating computation on uncertainties that most impact control.
This approach leads to 
optimal control synthesis that is coupled and co-evolved with uncertainty propagation through a coupled, adversarial-based optimization. \footnote{This is distinctly different from belief-space planning \cite{platt2017efficient} and experimental design related methods \cite{sathyanarayan2025behaviorsynthesiscontactawarefisher,wilson_fishermax,vantilborgh2025dualcontrolreferencegeneration} in that we never explicitly reduce uncertainty, rather just focus on task performance.}
Furthermore, we characterize how the induced controller is shaped by adapting over task-relevant uncertainty.


In summary, we present a recast of the robust control problem as an inference-driven process where control synthesis is actively shaped to improve decision making subject to task-sensitive uncertainties. 
The main contributions of this work are as follows:

\begin{enumerate}
    \item A non-parametric, deterministic approximation of task-dependent uncertainty in place of worst-case, conservative uncertainty design.
    \item Derivation of theoretical guarantees for the existence, optimality, and convergence of the Stein variational approximation to the task-sensitive parameter distribution.
    \item Demonstrations of improved performance/robustness tradeoffs of the proposed approach compared to classical and stochastic sampling methods.
\end{enumerate}

\section{Problem Formulation}

    Let $x_t \in \mathcal{X}$ be the system state, $u_t \in \mathcal{U}$ be the control input $\forall t \in [0,t_h]$ where $t_h$ is the discrete planning horizon, and $\theta \in \Theta$ be the system latent parameters for a nonlinear system of evolving dynamics $x_{t+1} = f(x_t,u_t,\theta)$.
    The objective is to solve the following control problem,
    \vspace{-1.0em}
    \begin{equation}\label{eq:task_opt}
        \begin{aligned}
            \min_{x_{0:t_h},u_{0:t_h}} \quad 
            & \mathcal{J}(x_{0:t_h},u_{0:t_h},\theta) 
            = m(x_{t_h},u_{t_h},\theta) + \sum_{t=0}^{t_h-1} \ell(x_t,u_t,\theta) \\
            \text{s.t.} \quad
            & x_0 \in \mathcal{X}_0,\;
              u_t \in \mathcal{U},\;
              x_{t+1} = f(x_t,u_t,\theta)
        \end{aligned}
    \end{equation}
    where $x_0$ are the system initial conditions, $\ell(\cdot)$ is the stage cost, and $m(\cdot)$ is the terminal cost.
    The above optimization is iteratively solved for a single planning cycle $t \in [0,t_h]$ and the optimal first control input is administered to the system in a receding-horizon manner.
    \begin{assumption}[Set Compactness and Measurability]\label{assum:set_assumptions}
    Let $n_x,n_u,n_\theta \in \mathbb{N}$. The state, input, and parameter spaces satisfy
    \begin{equation}
        \mathcal{X} \in \mathcal{B}(\mathbb{R}^{n_x}), 
        \qquad
        \mathcal{U} \in \mathcal{B}(\mathbb{R}^{n_u}),
        \qquad
        \Theta \in \mathcal{B}(\mathbb{R}^{n_\theta}),
    \end{equation}
    respectively where $\mathcal{B}(\mathbb{R}^n)$ denotes the Borel $\sigma$-algebra on $\mathbb{R}^n$. Moreover, $\Theta$ is nonempty and compact.
    \end{assumption}
    \begin{remark}
        The uncertainty enters the control problem through both the dynamics and the performance index. Hence, for fixed $u_{0:t_h-1}$, the realized trajectory and realized cost are both functions of $\theta$.
    \end{remark}
    \begin{prop}[Existence of optimal state-control trajectory]\label{prop:existence}
        Given a fixed $\theta \in \Theta$, let the set of feasible trajectories be
        \begin{equation}
            \mathcal{T} := \left\{
            (x_{0:t_h},u_{0:t_h}) \;\middle|\;
            \begin{aligned}
            &x_0 \in \mathcal{X}_0, \\
            &u_t \in \mathcal{U}, \\
            &x_{t+1} = f(x_t,u_t,\theta), \quad \forall t\in[0,t_h-1]
            \end{aligned}
            \right\}.
        \end{equation}
        Suppose that:
        \begin{enumerate}
            \item $\mathcal{X}_0$ and $\mathcal{U}$ are compact,
            \item $f(\cdot,\cdot,\theta)$ is continuous in $(x,u)$,
            \item $\ell(\cdot,\cdot,\theta)$ and $m(\cdot,\cdot,\theta)$ are lower semicontinuous in $(x,u)$.
        \end{enumerate}
        Then the feasible set $\mathcal{T}$ is nonempty and compact, and $
        \exists\tau^* \in \mathcal{T}$ such that
        \begin{equation}
            \mathcal{J}(\tau^*,\theta) = \min_{\tau \in \mathcal{T}} \mathcal{J}(\tau,\theta).
        \end{equation}
    \end{prop}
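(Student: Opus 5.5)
The plan is a direct Weierstrass argument: show $\mathcal{T}$ is the continuous image of a compact set, hence nonempty and compact, and then show that $\mathcal{J}(\cdot,\theta)$ is lower semicontinuous on $\mathcal{T}$, so that it attains its minimum there.

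\emph{Reparametrize $\mathcal{T}$.} For fixed $\theta$, define the rollout map $\Phi : \mathcal{X}_0 \times \mathcal{U}^{t_h+1} \to \mathbb{R}^{(t_h+1)n_x} \times \mathbb{R}^{(t_h+1)n_u}$ by $\Phi(x_0,u_{0:t_h}) = (x_{0:t_h},u_{0:t_h})$, where the states are generated recursively by $x_{t+1}=f(x_t,u_t,\theta)$. Then $\mathcal{T}=\Phi(\mathcal{X}_0\times\mathcal{U}^{t_h+1})$, since every feasible trajectory is determined by its initial state and controls.

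\emph{Continuity of $\Phi$.} Argue by induction on $t$. The coordinate $x_0$ is continuous. If $x_t$ is a continuous function of $(x_0,u_{0:t-1})$, then $x_{t+1}=f(x_t,u_t,\theta)$ is a composition of continuous maps by hypothesis 2. The control coordinates are projections and hence continuous.

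\emph{Compactness and nonemptiness.} The domain $\mathcal{X}_0\times\mathcal{U}^{t_h+1}$ is a finite product of compact sets, so it is compact by Tychonoff (here just finite products in Euclidean space). Hence $\mathcal{T}$ is compact as a continuous image of a compact set. Nonemptiness requires $\mathcal{X}_0\neq\emptyset$ and $\mathcal{U}\neq\emptyset$. These are implicit in the problem being posed, and I would state them explicitly as standing assumptions, since compactness alone permits the empty set. Given a point $(x_0,u_{0:t_h})$ of the domain, its rollout lies in $\mathcal{T}$.

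\emph{The main obstacle: lower semicontinuity of $\mathcal{J}$.} Hypothesis 3 gives lower semicontinuity of $\ell$ and $m$ in $(x,u)$, but the domain issue needs care: the rolled-out states must lie where $\ell$ and $m$ are defined and l.s.c. I would handle this by taking $\ell$ and $m$ to be defined on all of $\mathbb{R}^{n_x}\times\mathbb{R}^{n_u}$, or by noting that $\Phi$ maps into $\mathcal{X}$ by the standing model. The argument then has three parts:
\begin{itemize}
\item each term $(x_{0:t_h},u_{0:t_h})\mapsto \ell(x_t,u_t,\theta)$ is the composition of an l.s.c. function with a continuous coordinate projection, so it is l.s.c.;
\item the terminal term $m(x_{t_h},u_{t_h},\theta)$ is l.s.c. by the same reasoning;
\item a finite sum of l.s.c. functions with values in $(-\infty,+\infty]$ is l.s.c., because $\liminf(a_k+b_k)\ge\liminf a_k+\liminf b_k$ whenever no $-\infty$ arises.
\end{itemize}

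\emph{Attainment.} Apply the Weierstrass theorem for lower semicontinuous functions: an l.s.c. function on a nonempty compact set is bounded below and attains its infimum. The proof is the usual one. Take a minimizing sequence $\tau_k\in\mathcal{T}$, extract a convergent subsequence $\tau_{k_j}\to\tau^*\in\mathcal{T}$ by compactness, and conclude $\mathcal{J}(\tau^*,\theta)\le\liminf_j \mathcal{J}(\tau_{k_j},\theta)=\inf_{\mathcal{T}}\mathcal{J}$. This yields $\tau^*$ with $\mathcal{J}(\tau^*,\theta)=\min_{\tau\in\mathcal{T}}\mathcal{J}(\tau,\theta)$.

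If $\mathcal{J}\equiv+\infty$ on $\mathcal{T}$, the minimum is still attained trivially. The statement is therefore fine, assuming the costs do not take the value $-\infty$, which I would also note.
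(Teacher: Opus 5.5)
Your proposal is correct and follows the same route as the paper: you show the rollout map is continuous by induction on $t$, write $\mathcal{T}$ as the continuous image of a compact product, use lower semicontinuity of the finite sum $\mathcal{J}(\cdot,\theta)$, and apply the Weierstrass theorem for l.s.c.\ functions. Your extra care only tightens points the paper glosses over: you state $\mathcal{X}_0,\mathcal{U}\neq\emptyset$ explicitly where the paper simply asserts $\mathcal{T}$ is nonempty, and you use $\mathcal{U}^{t_h+1}$ so that the terminal control $u_{t_h}$ entering $m$ is covered, where the paper writes $\mathcal{U}^{t_h}$.
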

        
        
    \begin{proof}
        Since $f(\cdot,\cdot,\theta)$ is continuous, induction over $t$ shows that the trajectory $x_{0:t_h}$ is a continuous function of $(x_0, u_{0:t_h-1})$.
        Hence the nonempty $\mathcal{T}$ is the image of the compact set $\mathcal{X}_0 \times \mathcal{U}^{t_h}$ under a continuous map. 
        Since finite sums preserve lower semicontinuity, $\mathcal{J}(\cdot,\theta)$ is lower semicontinuous on $\mathcal{T}$, and the Weierstrass extreme value theorem \cite{c48b12a01e96440c8e357b319a0a6180} yields a minimizer $\tau^* \in \mathcal{T}$ of $\mathcal{J}(\cdot,\theta)$.
    \end{proof}
    Note that we do not assume $\theta$ to be known when the control is synthesized.
    Instead, $\theta$ is modeled as an unknown element of $\Theta$, or more generally as a random variable on $\big(\Theta, \mathcal{B}(\Theta)\big)$.
    When prior information is available by a prior distribution $p_0(\theta)$, it is encoded by a reference measure $p \in \mathcal{P}(\Theta)$, where
    \begin{equation}
        \theta \sim p_0(\theta).
    \end{equation}
    For any admissible $(x_{0:t_h},u_{0:t_h})$, the map $\theta \mapsto \mathcal{J}(x_{0:t_h},u_{0:t_h},\theta)$
    is a random cost under any measure $p \in \mathcal{P}(\Theta)$.
    Thus, uncertainty about $\theta$ induces uncertainty about closed loop performance.
    
    \subsection{Uncertainty-Aware Optimal Control}
    
        This paper optimizes a control sequence $u_{0:T}$ with respect to the \emph{variability} of $\theta \mapsto \mathcal{J}(\cdot,\theta)$ over $\Theta$, leading to functionals over measures $p \in \mathcal{P}(\Theta)$ and a class of uncertainty-aware optimal control problems of the form,
        \begin{equation}\label{eq:minmax_original}
            \begin{split}
                \min_{x_{0:t_h},u_{0:t_h}} & \sup_{p\in \mathcal{A}} \mathbb{E}_p \big[ \mathcal{J}(x_{0:t_h},u_{0:t_h},\theta) \big] \\
                \textrm{s.t. } &
                \textrm{constraints in \eqref{eq:task_opt}}
            \end{split}
        \end{equation}
        where $\mathcal{A}\subseteq \mathcal{P}(\Theta)$ denotes a selected family of admissible distributions over the latent parameters.
        The formulation above highlights that robustness is defined with respect to distributions that capture parameter uncertainty in an adversarial manner, where the objective is to minimize an upper bound over the uncertainty defined over a family of probability distributions.

        The choice of selecting $\mathcal{A}$, however, remains a central challenge, as finding admissible distributions that characterize $\theta$ is difficult to obtain.
        In DRO theory \cite{Ben-Tal:RobustOptimization,Rahimian_2022,kuhn2025distributionallyrobustoptimization}, $\mathcal{P}$ is defined as the following ambiguity set,
        \begin{equation}\label{eq:dist_family_KL}
            \mathcal{P} = \left\{p \ | \ \mathbb{D}_{KL} \left[ p \| q\right] \leq \epsilon\right\}
        \end{equation}
        where $\mathbb{D}_{KL}[p\,\|\,q] := \int_{\Theta}\log\big(\tfrac{dp}{dq}\big)\,dp$ denotes the Kullback--Leibler (KL) divergence between $p,q\in\mathcal{P}(\Theta)$ with $p$ absolutely continuous with respect to the unknown target distribution $q$ that captures the realization of the true parameter $\theta^\ast$.
        The surrogate $p$ is \emph{KL-close} to $q$ whenever $\mathbb{D}_{KL}[p\,\|\,q]\le\epsilon$.
        Since finding $\mathcal{P}$ such that element $p$ remains as KL-close to the target unknown distribution is challenging to obtain, DRO relaxes the min-max problem defined in \eqref{eq:minmax_original} to be the Risk-Averse Optimal Control Problem \cite{Ian1973OptimalSL},
        \begin{equation}
            \label{eq:softdro}
            \begin{split}
                \min_{x_{0:t_h},u_{0:t_h}, \lambda > 0} &\lambda \epsilon + \lambda \log \mathbb{E}_q \Bigg[ \exp\Big(\frac{1}{\lambda} \mathcal{J}(x_{0:t_h},u_{0:t_h}, \theta)\Big) \Bigg] \\
                \textrm{s.t. } &
                \textrm{constraints in \eqref{eq:task_opt}}
            \end{split}
        \end{equation}
        where $\lambda \in \Lambda$ are the dual variables to \eqref{eq:minmax_original}.
        We can now show that the entropic objective above reduces to an expected-cost problem for sufficiently large $\lambda$.
        \begin{lemma}[Entropic risk expansion]\label{lem:entropic}
            Suppose $\theta \mapsto \mathcal{J}(x_{0:t_h},u_{0:t_h},\theta)$ is bounded on $\Theta$, with $\underline{J} \le \mathcal{J} \le \bar{J}$. 
            Then for any $q \in \mathcal{P}(\Theta)$,
            \begin{equation}\label{eq:entropic}
                \begin{split}
                    \lambda \log \mathbb{E}_q\!\bigg[&\exp\!\Big(\tfrac{1}{\lambda}\mathcal{J}(x_{0:t_h},u_{0:t_h},\theta)\Big)\bigg] \\
                    &= \mathbb{E}_q\!\left[\mathcal{J}\right]
                    + \frac{1}{2\lambda}\,\mathbb{V}_q\!\left[\mathcal{J}\right]
                    + \mathcal{O}(\lambda^{-2}),
                \end{split}
            \end{equation}
            where the first-order remainder is uniformly bounded as $\frac{1}{2\lambda}\mathbb{V}_q[\mathcal{J}] \le \frac{(\bar{J}-\underline{J})^2}{8\lambda}$.
        \end{lemma}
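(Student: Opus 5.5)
The plan is to treat the left-hand side of \eqref{eq:entropic} as the cumulant generating function of the bounded random variable $\mathcal{J}(\theta)$, $\theta\sim q$, evaluated at $s=1/\lambda$. First, I would fix the admissible $(x_{0:t_h},u_{0:t_h})$ and write $J:=\mathcal{J}(x_{0:t_h},u_{0:t_h},\theta)$ with $\underline{J}\le J\le\bar{J}$. Then define $K(s):=\log\mathbb{E}_q[e^{sJ}]$, so that the left-hand side equals $K(1/\lambda)/(1/\lambda)$.

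Because $J$ is bounded, $e^{sJ}\le e^{|s|\max(|\underline J|,|\bar J|)}$, and dominated convergence gives the following:
\begin{itemize}
\item $M(s)=\mathbb{E}_q[e^{sJ}]$ is finite and smooth (indeed real-analytic) on all of $\mathbb{R}$.
\item Derivatives pass under the expectation.
\item $M(s)>0$, so $K$ is smooth.
\end{itemize}

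Next, I would compute the derivatives of $K$. We have $K(0)=0$, $K'(0)=\mathbb{E}_q[J]$ and $K''(0)=\mathbb{V}_q[J]$. More generally, $K''(s)=\mathbb{V}_{q_s}[J]$, where $dq_s\propto e^{sJ}dq$ is the exponentially tilted measure. Taylor's theorem with Lagrange remainder then gives
$K(s)=s\mathbb{E}_q[J]+\tfrac{s^2}{2}\mathbb{V}_q[J]+\tfrac{s^3}{6}K'''(\xi)$ for some $\xi\in(0,s)$. Here $K'''(\xi)$ is the third central moment of $J$ under $q_\xi$. Since $q_\xi$ is still supported in $[\underline J,\bar J]$, this moment is bounded in absolute value by $(\bar J-\underline J)^3$, uniformly in $\xi$. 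Dividing by $s=1/\lambda$ yields $\mathbb{E}_q[J]+\frac{1}{2\lambda}\mathbb{V}_q[J]+R$ with $|R|\le(\bar J-\underline J)^3/(6\lambda^2)$. This remainder is $\mathcal{O}(\lambda^{-2})$ as $\lambda\to\infty$, with a constant independent of $q$ and of the trajectory.

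For the stated bound on the first-order term, I would apply Popoviciu's inequality: any random variable supported in $[\underline J,\bar J]$ has variance at most $(\bar J-\underline J)^2/4$. This follows because $\mathbb{V}[J]\le\mathbb{E}[(J-c)^2]$ with $c=(\underline J+\bar J)/2$, and $|J-c|\le(\bar J-\underline J)/2$. Multiplying by $1/(2\lambda)$ gives $(\bar J-\underline J)^2/(8\lambda)$.

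The main obstacle is making the $\mathcal{O}(\lambda^{-2})$ claim uniform rather than merely asymptotic. This is exactly why I use the Lagrange remainder together with the tilted-measure interpretation of $K'''$: it reduces the remainder to a central moment of a measure on a bounded interval, which is controlled by $\bar J-\underline J$ alone. If a sharper constant were wanted, I would instead bound $|K'''|$ by the maximal third central moment on an interval of length $\bar J-\underline J$, but this refinement is not needed for the statement.
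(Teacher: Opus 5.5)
Your proposal is correct and follows the paper's route: expand the cumulant generating function $\log\mathbb{E}_q[e^{s\mathcal{J}}]$ to second order about $s=0$, evaluate at $s=1/\lambda$, and bound the variance term with Popoviciu's inequality. Your Lagrange-remainder step via the exponentially tilted measure goes beyond the paper, which only asserts $\mathcal{O}(\lambda^{-2})$; it gives an explicit, uniform bound $|R|\le(\bar J-\underline J)^3/(6\lambda^2)$ that does not depend on $q$ or the trajectory.
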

        \begin{proof}
            Let $g(s) := \log \mathbb{E}_q[\exp(s\,\mathcal{J})]$ denote the cumulant generating function of $\mathcal{J}$ under $q$, which is finite and smooth for bounded $\mathcal{J}$, with $g(0)=0$, $g'(0)=\mathbb{E}_q[\mathcal{J}]$, and $g''(0)=\mathbb{V}_q[\mathcal{J}]$. 
            A second-order Taylor expansion of $g$ about $s=0$ evaluated at $s=1/\lambda$ yields
            \begin{equation}
                \lambda g(1/\lambda) = \mathbb{E}_q[\mathcal{J}] + \frac{1}{2\lambda}\mathbb{V}_q[\mathcal{J}] + \mathcal{O}(\lambda^{-2}).
            \end{equation}
            The uniform bound follows from Popoviciu's variance inequality \cite{NAM_1880_2_19__224_1}, 
            \begin{equation}
                \mathbb{V}_q[\mathcal{J}] \le \frac{1}{4}(\bar{J}-\underline{J})^2.
            \end{equation}
        \end{proof}

        In prior work, constructing $\mathcal{P}$ requires a maximum-likelihood estimate of target $q$ \cite{liu2025datadrivendistributionallyrobustoptimal} and a KL-neighborhood around it, demanding parameter-space coverage and repeated system sampling \cite{9661376} that is unavailable for latent $\theta$. 
        We instead assume minimal system feedback and shape uncertainty that is most critical to $\mathcal{J}$ via variational inference, avoiding explicitly choosing $\mathcal{P}$ entirely.
        
\section{Stein Variational Inference}

    Given a random variable $\theta \sim p_0(\theta)$ with prior law $p_0 \in \mathcal{P}(\Theta)$, Variational Inference (VI) offers a powerful method that approximates an intractable, unknown target distribution $q$ by optimizing a surrogate $p \in \mathcal{P}(\Theta)$ chosen from a family of distributions that satisfy \eqref{eq:dist_family_KL}, through the following,
    \begin{equation}
        \begin{split}
            p^*(\theta) &= \arg \min_{p\in\mathcal{P}} \{\mathbb{D}_{KL} (p \| q) \\
            &\equiv  \mathbb{E}_{\theta \sim p(\theta)} \big[\log p(\theta)\big] - \mathbb{E}_{\theta \sim p(\theta)} \big[\log q(\theta)\big] + \log c\}
        \end{split}
    \end{equation}
    where $p^*$ is the approximation of $q$, and $c$ is a constant that is often negligible in practice.
    As aforementioned, the choice of $\mathcal{P}$ remains a nontrivial challenge, and can negatively impact the performance of VI.
    As a solution, Stein Variational Inference, also referred to as Stein Variational Gradient Descent (SVGD) \cite{liu2019steinvariationalgradientdescent}, avoids the need of explicitly choosing $\mathcal{P}$ entirely.
    Instead, SVGD initializes a set of particles $\{\theta_0^i\}_{i=1}^N \sim p_0(\theta)$, and subsequently evolves them according to the mapping,
    \begin{equation}\label{eq:svgd_eq}
        \theta^i_{t+1} \leftarrow \theta^i_t + \alpha\,\phi^*_{p,q}(\theta^i_t)
        \qquad \forall t \in \{0, \dots, T-1\}
    \end{equation}
    where $\alpha$ is a step size parameter and $\phi_{p,q}^* (\theta_t^i)$ is a smooth function that characterizes the $t^{th}$ steepest descent direction that minimizes the KL-divergence measure, and $T$ represents the total number of SVGD iterations.
    The gradient $\phi^*_{p,q}$ is the solution to the following steepest descent problem,
    \begin{equation}\label{eq:steepest_desc_opt}
        \begin{split}
            \phi_{p,q}^*(\cdot) &= \arg \min_{\phi \in \mathcal{H}^d} \{ -\nabla_\xi \mathbb{D}_{KL} (p \| q) \ | \ \| \phi \|_{\mathcal{H}^d} \leq 1 \} \\
            &=\mathbb{E}_{\theta \sim p} \big[\mathcal{A}_q \mathcal{k}(\theta,\cdot) \big]
        \end{split}
    \end{equation}
    where $\mathcal{A}_q(\cdot): \Xi \rightarrow \mathcal{S}_\Xi$ is Stein's identity \cite{Stein1972} computed for a universal positive definite kernel function $k: \Xi \times \Xi \rightarrow \mathbb{R}$ operating in a dense $\mathcal{H}^d$ in the space of continuous functions $C(\Xi,\mathbb{R}^d)$, where $\mathcal{H}^d$ is the corresponding Reproducing Kernel Hilbert Space (RKHS).
    The closed form solution to \eqref{eq:steepest_desc_opt} is computed to be,
    \begin{equation}
        \begin{split}
            \phi^*_{p,q}(\cdot) = \mathbb{E}_{\theta\sim p}\!\left[k(\theta,\cdot)\nabla_{\theta}\log p'(\theta) + \nabla_{\theta} k(\theta,\cdot)\right]
        \end{split}
    \end{equation} 
    where $p$ denotes the law of the current particles and $p'$ is the tractable Boltzmann approximation to the target $q$, 
    \begin{equation}
        p'(\theta) = \frac{p(\mathcal{O} | \theta) p_0(\theta)}{\int_\Theta p(\mathcal{O}|\theta)p_0(\theta) d\theta}
    \end{equation}
    where $\int_\Theta p(\mathcal{O}|\theta)p_0(\theta)d\theta = \textrm{constant}$ and $\mathcal{O}$ is an observation output denoting an optimality criterion.
    Commonly, $\phi_{p,q}^*: \Theta \rightarrow \mathcal{S}_\Theta$ is approximated using Monte-Carlo samples over $\theta$,
    \begin{equation}
        \phi_{p,q}^*(\cdot) \approx \frac{1}{N} \sum_{i=1}^N k(\theta_t^i,\cdot) \nabla_\theta \log p'(\theta_t^i) + \nabla_\theta k(\theta_t^i,\cdot)
    \end{equation}
    which are initialized at random and then updated deterministically.
    With sufficient samples $N$ and number of SVGD iterations, the evolved particles become a sufficient approximation of the target $q(\theta)$ and show provably strong convergence \cite{korba2021nonasymptoticanalysissteinvariational} under regularity conditions.

\section{Parameter-induced Optimality Gap as Inference}

    This section outlines an approach that integrates the theoretical benefits of DRO combined with diverse, task-based parameter adaptation techniques via SVGD.
    We first define the Lagrangian of the optimization problem for a single planning cycle defined in \eqref{eq:task_opt} as,
    \begin{equation}
        \label{eq:mc_gap}
        \begin{split}
            \mathcal{L}(x_{0:t_h},u_{0:t_h},\theta) &= \mathcal{J}(x_{0:t_h},u_{0:t_h},\theta) + \beta^\top h(x_{0:t_h},u_{0:t_h},\theta) \\
            \textrm{where } &h(x_{0:t_h},u_{0:t_h},\theta) = \begin{Bmatrix}
                x_0 - \bar{x}_0 \\
                x_{t+1} - f(x_t,u_t,\theta)
            \end{Bmatrix}
        \end{split}
    \end{equation}
    where $\beta$ comprise the dual equality variables, $h(\cdot)$ are the equality constraints, and $x_t\in \mathcal{X}$ and $u_t \in \mathcal{U}$ are enforced.
    \begin{assumption}[Lagrangian $\Theta$-Smoothness]\label{assum:lagrangian} 
        $\mathcal{L}$ is bounded and continuous on the compact set $\Theta$, and
        there exists $\bar{\epsilon} < \infty$ such that
        \begin{equation}
            \sup_{x \in \mathcal{X},\, u \in \mathcal{U},\, \theta \in \Theta}
            \|\nabla_{\theta}\mathcal{L}(x,u,\theta)\| \le \bar{\epsilon}.
        \end{equation}
    \end{assumption}
    Because $\theta$ induces uncertainty about closed loop performance optimized by Lagrangian $\mathcal{L}: \mathcal{X} \times \mathcal{U} \times \Theta \rightarrow \mathbb{R}$, we define the optimality gap produced by a random variable $\theta$ as,
    \begin{equation}
        \delta \mathcal{L}(\cdot,\theta) = \mathcal{L}(x^*,u^*, \theta) - \mathcal{L}(x^*, u^*, \theta^*)
    \end{equation}
    where $\theta^*$ is the true parameter value.
    Referring back to \eqref{eq:softdro} and applying Lemma~\ref{lem:entropic} with $\mathcal{J}$ replaced by the Lagrangian $\mathcal{L}$ in \eqref{eq:mc_gap} (which coincides with $\mathcal{J}$ on dynamics-consistent trajectories, since $h(\cdot)=0$), the expected Lagrangian is approximated via Monte-Carlo expectation as,
    \begin{equation}\label{eq:exp_approx}
        \begin{split}
            \min_{x_{0:t_h},u_{0:t_h}} &\mathbb{E}_q \big[ \mathcal{L}(x_{0:t_h},u_{0:t_h}, \theta) \big]  \approx \frac{1}{N} \sum_{i=1}^N \mathcal{L}(x_{0:t_h},u_{0:t_h}, \theta^i) \\
            &= \mathcal{L} (\tau, \theta)\big|_{\theta=\theta^*} +  \gamma \ \frac{1}{N} \sum_{i=1}^N \delta \mathcal{L}(\tau, \theta^i)
        \end{split}
    \end{equation}
    where $\gamma$ is a design parameter such that $\gamma = 1$ equates to the true Monte-Carlo approximation above. 
    However, in practice, tuning $\gamma$ is desirable in order to control the trade-off between optimizing over the nominal parameters versus the variations of parameters.

    Note that $\theta^*$ is unknown a-priori, and requires an additional approximation in order to implement \eqref{eq:exp_approx}. We choose the empirical posterior mean (particle mean) as a standard and consistent estimator in particle-based adaptation used in Stein-based control \cite{lee2024steinvariationalergodicsearch,lambert2021steinvariationalmodelpredictive},
    \begin{equation}
        \label{eq:particle_avg}
        \theta^*\approx \bar{\theta} = \frac{1}{N} \sum_{i=1}^N \theta^i
    \end{equation}
    which is shown to provide a stable, low-variance reference for defining the
    optimality gap.
    The optimization in \eqref{eq:exp_approx} is commonly used in current robust control methods \cite{Abraham_2020, barcelos2021dualonlinesteinvariational}, but a core challenge in implementation remains in \emph{how the parameter samples $\{ \theta^i \}_{i=1}^N$ are chosen}.
    

    \begin{figure*}[ht!]
        \centering
        \subfloat[\textbf{Rocket example.} Given its initial condition, the rocket system navigates to a landing pad safely. 
        The top-heavy rocket system requires explicit reasoning over this uncertainty by modulating gimbal and thrust to maintain a stable configuration under uncertainty about its mass, inertia, and center of mass location.]{
            \includegraphics[width=0.99\textwidth]{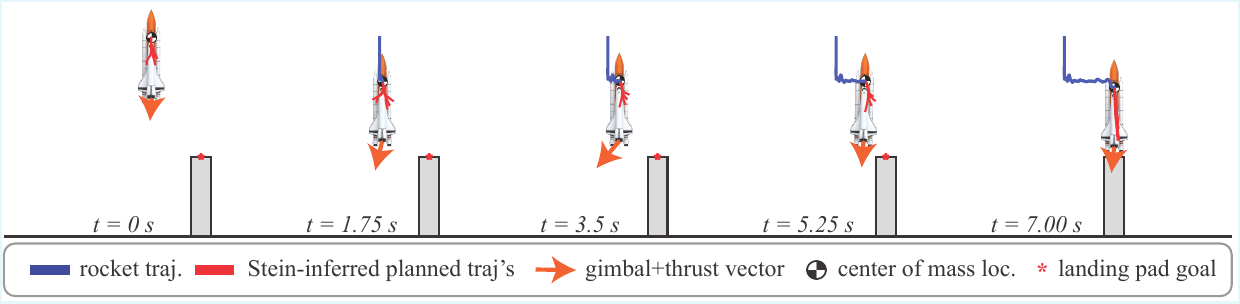}
            \label{fig:rocket_example_traj}
        }
        \hfill
        \vspace{-1em}
        \subfloat[\textbf{Cartpole example.} A cartpole system achieves a swingup task from a downward-stable state.
        The system exploits oscillatory motions until SVGD converges to an approximate posterior, then robustly stabilizes the pole to the goal state under uncertainty over pole mass and length.]{
            \includegraphics[width=0.99\textwidth]{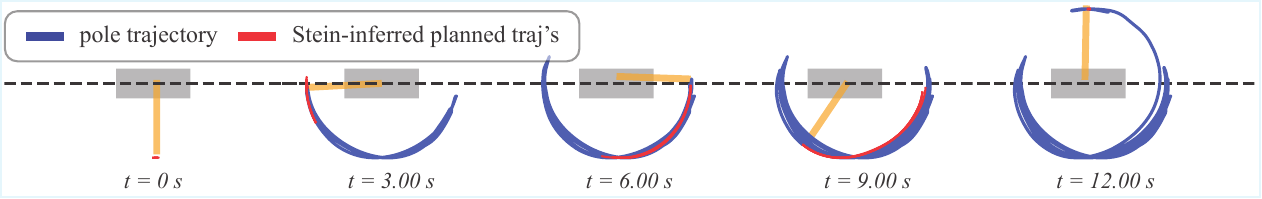}
            \label{fig:cartpole_example_traj}
        }
        \caption{\textbf{Example Experimental Outcomes from Our Method. } We demonstrate the efficacy of our approach on a two-dimensional rocket landing and cartpole swingup control tasks.}
        \label{fig:combined_examples}
        \vspace{-1.5em}
    \end{figure*}

    \subsection{Stein Variational Uncertainty Adaptation}

        We outline a principled, non-parametric approach that guides parameter particles $\theta^i$ towards task-critical uncertainties by contextualizing SVGD as an inference problem over uncertain parameters. 
        We define the log posterior distribution given a parameter prior $p_0(\theta)$ as a Boltzmann posterior,
        \begin{equation}\label{eq:log_posterior}
            \begin{split}
                \log p'(\theta) &= \log \Bigg( \frac{\exp \big( \delta \mathcal{L}(x_{0:t_h},u_{0:t_h}, \theta)\big) p_0(\theta)}{z} \Bigg) \\
                &= \log p_0(\theta) + \delta \mathcal{L}(x_{0:t_h},u_{0:t_h}, \theta) - \log z
            \end{split}
        \end{equation}
        where $p(\mathcal{O}|\theta)$ is the observation likelihood via an exponentiated optimality gap $p(\mathcal{O}\mid\theta) \propto \exp(\delta\mathcal{L}(x_{0:t_h},u_{0:t_h},\theta))$, commonly used in relevant Stein literature \cite{lee2024steinvariationalergodicsearch,lambert2021steinvariationalmodelpredictive}, in which we explicitly characterize model optimality variations induced by uncertain parameters that directly impacts the Stein flow.
        Also note that $z$ is a constant that is often ignored in the optimization.
        We utilize this posterior approximation to the target $q$ by evolving initialized Stein particles $\{ \theta^i\}_{i=1}^N$ by computing steepest descent $\phi^*$ using Stein's identity given prior $p_0(\theta)$,
        \begin{equation}\label{eq:steepest_desc_params}
            \begin{split}
                \phi^*(\cdot) &= \mathbb{E}_{\theta \sim p(\theta)} \big[ k(\theta,\cdot) \nabla_\theta \log p'(\theta) + \nabla_\theta k(\theta,\cdot) \big] \\
                &\approx \frac{1}{N} \sum_{i=1}^N k(\theta^i,\cdot)\nabla_{\theta} \log p'(\theta^i) + \nabla_{\theta} k(\theta^i,\cdot)
            \end{split}
        \end{equation}
        where $k:\Theta \times \Theta \rightarrow \mathbb{R}$ is a universal positive definite kernel.
        Our approach evaluates parameter sensitivity through $t^{th}$ evolving estimators $\{ \theta_t^i\}_{i=1}^N$, capturing the task-relevant deviations $\delta \mathcal{L}(x_{0:t_h},u_{0:t_h}, \theta)$ and enabling control synthesis that explicitly counteracts the induced optimality gap. 
        Note that Algorithm~1 performs a single SVGD update per planning cycle, so the Stein iteration index coincides with the time step $t$ and the total number of SVGD iterations equals the task duration $T$.

        \begin{remark}[Consistency with the DRO relaxation]
            \label{rem:dro_consistency}
            By the Donsker-Varadhan variational formula \cite{doi:10.1073/pnas.72.3.780}, for any
            bounded $\mathcal{L}$ and any reference measure $p_0 \in \mathcal{P}(\Theta)$,
            \begin{equation}
                \label{eq:dv}
                \begin{split}
                    \lambda &\log \mathbb{E}_{p_0}\!\left[\exp\!\Big(\tfrac{1}{\lambda}
                    \mathcal{L}(\tau,\theta)\Big)\right] \\
                    &= \sup_{p \,\ll\, p_0}\Big\{\mathbb{E}_{p}\big[\mathcal{L}(\tau,\theta)\big]
                - \lambda\,\mathbb{D}_{KL}\big[p \,\|\, p_0\big]\Big\},
                \end{split}
            \end{equation}
            where the supremum is attained at $\tfrac{dp^{\star}}{dp_0}(\theta) \propto \exp\big(\mathcal{L}(\tau,\theta)/\lambda\big)$. 
            The left-hand side of \eqref{eq:dv} is, up to the constant $\lambda\epsilon$, the entropic objective of \eqref{eq:softdro}, while the right-hand side is the Lagrangian form of the inner adversarial maximization in \eqref{eq:minmax_original}. 
            Since $\delta\mathcal{L}(\tau,\theta)$ and $\mathcal{L}(\tau,\theta)$ differ by a $\theta$-independent constant, \eqref{eq:log_posterior} coincides with the least-favorable distribution $p^{\star}$ at $\lambda = 1$.
            Coefficient $\lambda$ merely scales the exponent in \eqref{eq:log_posterior} by $1/\lambda$. 
            The Stein flow \eqref{eq:steepest_desc_params} therefore targets precisely the maximizer of the KL-regularized distributionally robust objective by transporting particles toward  highly sensitive regions of the task-based posterior as the ensemble-based controller seeks the best response against the particles via \eqref{eq:exp_approx}.
        \end{remark}

        \begin{algorithm}
        \caption{Stein Variational Uncertainty-Adaptive MPC}\label{alg:sv_dro}
            \begin{algorithmic}[1]
            \Require planning horizon $t_h$, initial trajectory $ (x_{0:t_h}, u_{0:t_h})$ parameter prior $p_0(\theta)$, $N$ total number of particles, SVGD step size $\alpha$, design parameter $\gamma$, initial particles $\{\theta_0^i\}_{i=1}^N \sim p_0(\theta)$, dynamics $f(x_t,u_t,\theta)$, total time duration $T$.
            \State $t = 0$
            \While{$t< T$}
                \State $\tau^* \gets \textrm{\eqref{eq:exp_approx}}$
                \State \text{Apply first control $u^*[0]$ via MPC}
                \State $x_{t+1} \gets f(x_t,u^*[0],\theta)\big|_{\theta=\theta^*}$ \Comment{Env. Step}
                \State $\{\theta_{t+1}^i\}_{i=1}^N \gets$ SVGD($\{\theta_t^i\}_{i=1}^N $) \Comment{\eqref{eq:svgd_eq}}
                \State $t \gets t + 1$
            \EndWhile
            \end{algorithmic}
        \end{algorithm}
        
        \begin{lemma}[Continuity of the Stein potential]\label{lemma:V}
            Given a feasible state-control pair $(x_{0:t_h},u_{0:t_h})$, let
            \begin{equation}
                V(\theta):=-\delta \mathcal{L}(x_{0:t_h},u_{0:t_h},\theta)-\log p_0(\theta),
                \qquad \theta \in \Theta,
            \end{equation}
            where
            \begin{equation}
                \delta \mathcal{L}(x_{0:t_h},u_{0:t_h},\theta)= \mathcal{L}(x_{0:t_h},u_{0:t_h},\theta)-\mathcal{L}(x_{0:t_h},u_{0:t_h},\theta^\ast).
            \end{equation}
            Suppose Assumption \ref{assum:lagrangian} holds and $p_0(\theta)$ is strictly positive and continuous on $\Theta$. Then $V$ is continuous on $\Theta$.
            \end{lemma}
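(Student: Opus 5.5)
The plan is to write $V$ as a sum of three terms and show each is continuous on $\Theta$; continuity of $V$ then follows because finite sums of continuous functions are continuous. Fix the feasible pair $(x_{0:t_h},u_{0:t_h})$ and expand
\begin{equation*}
    V(\theta) = -\mathcal{L}(x_{0:t_h},u_{0:t_h},\theta) + \mathcal{L}(x_{0:t_h},u_{0:t_h},\theta^\ast) - \log p_0(\theta).
\end{equation*}
The middle term does not depend on $\theta$. Because the trajectory is fixed and $\theta^\ast \in \Theta$, Assumption~\ref{assum:lagrangian} makes $\mathcal{L}(x_{0:t_h},u_{0:t_h},\theta^\ast)$ a finite real number. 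It is therefore a constant function on $\Theta$, hence continuous.

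For the first term, Assumption~\ref{assum:lagrangian} states directly that $\mathcal{L}$ is continuous on $\Theta$, so $\theta\mapsto -\mathcal{L}(x_{0:t_h},u_{0:t_h},\theta)$ is continuous. The gradient bound gives a stronger conclusion, which I would record for later Stein-flow arguments. If $\Theta$ is convex, or more generally any two points can be joined by a path in $\Theta$, then the mean value inequality gives
\begin{equation*}
|\mathcal{L}(\cdot,\theta)-\mathcal{L}(\cdot,\theta')|\le \bar{\epsilon}\,\|\theta-\theta'\|,
\end{equation*}
so the term is Lipschitz. I would present this only as a remark, since bare continuity already comes from the assumption without any convexity of $\Theta$.

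For the last term, $p_0$ is continuous and strictly positive on $\Theta$, so it maps $\Theta$ into $(0,\infty)$. The logarithm is continuous on $(0,\infty)$, so $\log p_0$ is continuous as a composition of continuous maps. Compactness of $\Theta$ (Assumption~\ref{assum:set_assumptions}) also gives $\min_\Theta p_0>0$, so $\log p_0$ is bounded. This is not needed for continuity but is worth noting, because it makes $V$ bounded and, later, continuous on a compact set and therefore attaining its extrema.

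The only delicate point is interpreting the hypotheses. Assumption~\ref{assum:lagrangian} is phrased with the Lagrangian evaluated at $(x,u)$, and one has to read it as continuity in $\theta$ for every fixed trajectory, in particular the given feasible one. Strict positivity of $p_0$ is what rules out $\log p_0 = -\infty$. Once these are settled, the proof is a direct combination of the three observations above.
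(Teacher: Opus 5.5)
Your proof is correct and follows the paper's argument: the $\theta^\ast$ term is a constant, $\mathcal{L}$ is continuous in $\theta$ by Assumption~\ref{assum:lagrangian}, $\log p_0$ is continuous because $p_0$ is strictly positive and continuous, and a finite sum of continuous functions is continuous. One small slip in your optional Lipschitz remark: on a merely path-connected $\Theta$, the mean value inequality bounds the difference by $\bar{\epsilon}$ times the length of the connecting path, not by $\bar{\epsilon}\|\theta-\theta'\|$, so the Euclidean Lipschitz claim needs convexity (or a path-length-to-distance constant); this does not affect the lemma.
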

        \begin{proof}
            Since $\theta^\ast$ is fixed, the scalar
            \begin{equation}
                \mathcal{L}(x_{0:t_h},u_{0:t_h},\theta^\ast)
            \end{equation}
            is constant with respect to $\theta$. Therefore,
            \begin{equation}
                \theta \mapsto \delta \mathcal{L}(x_{0:t_h},u_{0:t_h},\theta)
            \end{equation}
            is continuous on $\Theta$.
            
            Because $p_0$ is strictly positive and continuous on $\Theta$, the function $\log p_0(\theta)$ is continuous on $\Theta$.
            Therefore, $V(\theta)=-\delta \mathcal{L}(x_{0:t_h},u_{0:t_h},\theta)-\log p_0(\theta)$ is the sum of two continuous functions on $\Theta$, and is thus continuous.
        \end{proof}

        \begin{theorem}[Posterior Convergence in $\Theta$ \cite{korba2021nonasymptoticanalysissteinvariational}]\label{thm:svgd_convergence}
            Given an initial prior $p_0(\theta)$, let $V(\theta)= -\delta \mathcal{L}(x_{0:t_h},u_{0:t_h},\theta) - \log p_0(\theta)$ be the continuous (hence lower semi-continuous), task-dependent value function on the compact space $\Theta$, $k(\cdot,\theta)$ is a universal positive definite kernel in $\Theta$, and let $KSD_{p'}(\mu_i) = \| \phi^*(\mu_i)\|_{\mathcal{H}^\Theta}^2$ be the kernel Stein discrepancy over the posterior $p'$. Then, $\exists \kappa > 0$ such that,
            \begin{equation}\label{eq:convergence}
                \frac{1}{N} \sum_{i=1}^N\textrm{KSD}_{p'}(\mu_i) \leq \frac{\mathbb{D}_{KL}(p_0\|p')}{Nc_\kappa}
            \end{equation}
            where $c_\kappa$ is a $\kappa$-parameterized term \cite[Corollary 6]{korba2021nonasymptoticanalysissteinvariational}.
        \end{theorem}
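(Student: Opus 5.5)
The plan is to reduce the statement to the non-asymptotic descent result of Korba et al.\ (Corollary~6 of \cite{korba2021nonasymptoticanalysissteinvariational}). To do this, I would verify that the Boltzmann posterior $p'(\theta)\propto \exp(-V(\theta))$ of \eqref{eq:log_posterior} meets their standing hypotheses, and then telescope the one-step KL descent inequality over the Stein iterations.

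First, I would write $p' \propto e^{-V}$ with $V=-\delta\mathcal{L}-\log p_0$. By Lemma~\ref{lemma:V}, $V$ is continuous on the compact set $\Theta$ (Assumption~\ref{assum:set_assumptions}), so it is bounded. Hence the normalizer $z$ is finite and positive, and $p'$ is a well-defined density on $\Theta$, strictly positive and bounded above and below. This gives $\mathbb{D}_{KL}(p_0\|p')<\infty$, since $\log(p_0/p') = \delta\mathcal{L}+\log z$ is bounded by Assumption~\ref{assum:lagrangian}.

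Second, I would check the smoothness hypothesis used by Korba et al. They require $\nabla V$ to be Lipschitz, i.e.\ $\|\nabla^2 V\|\le M$, together with a bounded kernel satisfying $\|k\|_\infty,\|\nabla k\|\le B$. Boundedness of the kernel follows from continuity of a universal kernel on compact $\Theta$. For $V$, Assumption~\ref{assum:lagrangian} only supplies $\|\nabla_\theta \mathcal{L}\|\le\bar\epsilon$. I would therefore strengthen it (or assume it tacitly) to $C^2$ regularity of $\mathcal{L}$ and $\log p_0$ in $\theta$. Compactness then yields a uniform Hessian bound $M$.

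Third, I would invoke the descent lemma. For a step size $\alpha$ small enough, it gives, for each iterate $\mu_n$,
\begin{equation*}
\mathbb{D}_{KL}(\mu_{n+1}\|p') - \mathbb{D}_{KL}(\mu_n\|p') \le -\alpha\Big(1-\tfrac{\alpha \kappa}{2}\Big)\,\mathrm{KSD}_{p'}(\mu_n),
\end{equation*}
where $\kappa$ depends on $B$, $M$ and a uniform moment bound. The moment bound is automatic here because $\Theta$ is compact.

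Finally, I would sum this inequality over $n=1,\dots,N$ and drop the nonnegative terminal KL. Dividing by $N$ and setting $c_\kappa=\alpha(1-\alpha\kappa/2)$, with $\mu_1=p_0$, gives \eqref{eq:convergence}. Here I interpret the index of $\mu_i$ as the Stein iteration index, in line with Korba et al.

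The main obstacle is the third step. Korba et al.\ work on $\mathbb{R}^d$ with a population-limit flow, whereas here $\Theta$ is compact and the flow must be kept from leaving $\Theta$. I would handle this by assuming the particles remain in $\Theta$ (for instance, by taking $\Theta$ convex with a small enough step), or by extending $V$ smoothly outside $\Theta$. A secondary issue is that $\theta^*$ is replaced by $\bar\theta$ and the trajectory $\tau$ changes each planning cycle, so $p'$ is not stationary. I would fix $\tau$ and $\theta^*$ for the analysis, as the statement implicitly does.
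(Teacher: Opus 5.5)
Your route is the paper's route. The paper gives no argument beyond citing Corollary~6 of \cite{korba2021nonasymptoticanalysissteinvariational}, with Lemma~\ref{lemma:V} (continuity of $V$) as its only hypothesis check. You unpack that corollary the way Korba et al.\ prove it: a one-step KL descent, then telescoping from $\mu_1=p_0$.

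You are right that continuity of $V$ is not enough. Their hypotheses are:
\begin{itemize}
\item a bound on $k(\theta,\theta)$ and on $\|\nabla_\theta k(\theta,\cdot)\|_{\mathcal{H}^d}$; the latter needs a differentiable kernel such as RBF or IMQ, not merely a continuous universal one;
\item a uniform Hessian bound on $V$;
\item a uniform bound on $\mathrm{KSD}_{p'}(\mu_n)$ along the iterates. This enters through the step-size restriction that makes $\theta\mapsto\theta+\alpha\phi^*_{\mu_n}(\theta)$ a diffeomorphism, where $\phi^*_\mu$ is the Stein direction computed under $\mu$.
\end{itemize}
You are also right that the $\mu_i$ must be population-limit laws indexed by iteration. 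The inequality says nothing about the $N$-particle empirical measures that Algorithm~\ref{alg:sv_dro} actually propagates.

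The step that does not go through is your treatment of the compact domain. Convexity of $\Theta$ only guarantees that the segment from $\theta$ to $\theta+\alpha\phi^*_\mu(\theta)$ lies in $\Theta$ when both endpoints do. A small step does not keep the endpoint in $\Theta$ when $\mu$ has mass arbitrarily close to $\partial\Theta$ and $\phi^*_\mu$ points outward somewhere there.

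For this target, invariance genuinely fails. Since $p_0$ is strictly positive and continuous on compact $\Theta$, $p'$ is bounded below up to $\partial\Theta$. The divergence theorem then gives $\phi^*_{p'}(\cdot)=\int_{\partial\Theta}k(\theta,\cdot)\,p'(\theta)\,n(\theta)\,dS(\theta)$, with $n$ the outward unit normal, and this does not vanish in general (e.g.\ for the RBF kernel on a box). If the update kept mass inside $\Theta$, your descent inequality started at $\mu=p'$ would give $\mathbb{D}_{KL}\big((I+\alpha\phi^*_{p'})_\#p'\,\|\,p'\big)\le -c_\kappa\,\mathrm{KSD}_{p'}(p')<0$, which is impossible. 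The same boundary term shows that on $\Theta$ the quantity $\mathrm{KSD}_{p'}$ is not even a discrepancy.

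Your other option works: extend $V$ to a $C^2$ potential $\tilde V$ on $\mathbb{R}^{n_\theta}$ with bounded Hessian. But then three things change:
\begin{itemize}
\item the target is $\tilde p'\propto e^{-\tilde V}$ rather than $p'$;
\item the right-hand side becomes $\mathbb{D}_{KL}(p_0\|\tilde p')$;
\item the uniform bound on $\mathrm{KSD}_{\tilde p'}(\mu_n)$ no longer follows from compactness, since particles are free to leave $\Theta$.
\end{itemize}
You must either keep that bound as an explicit hypothesis, as Korba et al.\ do, or derive it. One way is to give $\tilde V$ quadratic growth, so that $\tilde p'$ satisfies a $T_1$ inequality, and induct:
\begin{itemize}
\item descent up to step $n$ gives $\mathbb{D}_{KL}(\mu_{n+1}\|\tilde p')\le\mathbb{D}_{KL}(p_0\|\tilde p')$;
\item $T_1$ turns this into a uniform first-moment bound;
\item with the Hessian bound, that controls $\mathbb{E}_{\mu_{n+1}}\|\nabla\tilde V\|$ and hence $\mathrm{KSD}_{\tilde p'}(\mu_{n+1})$.
\end{itemize}
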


        Thus, for sufficient $T$ number of SVGD iterations, the approximated posterior $p'$ sufficiently represents the intractable target posterior $q$ which we show in Section~\ref{sec:results}.

        \begin{theorem}[Convergence of optimality gap]
            Let $q \in \mathcal{P}(\Theta)$ denote the intractable target posterior over parameters, and let $\{ p_T'\}_{T > 0} \subset \mathcal{P}(\Theta)$ denote the sequence of empirical Stein particle measures (posteriors) generated by the SVGD updates for $T$ iterations in \eqref{eq:svgd_eq}.
            Given Assumption \ref{assum:set_assumptions} holds true, the conditions in Theorem \ref{thm:svgd_convergence} are satisfied, and the mapping $\theta \mapsto \mathcal{L}(x_{0:t_h},u_{0:t_h},\theta)$ is bounded and continuous for fixed state control pairs $(x_{0:t_h},u_{0:t_h})$. Then for any $\varepsilon > 0$, $\exists T \in \mathbb{Z}_+$ such that,
            \begin{equation}
                \Big| \mathbb{E}_{\theta \sim q(\theta)} \big[ \mathcal{L}(\cdot,\theta)\big] - \mathbb{E}_{\theta \sim p_T'(\theta)} \big[ \mathcal{L}(\cdot,\theta)\big] \Big| \leq \varepsilon
            \end{equation}
        \end{theorem}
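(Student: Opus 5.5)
The plan is to reduce the claim to weak convergence of the Stein particle measures and then use boundedness and continuity of $\theta \mapsto \mathcal{L}(x_{0:t_h},u_{0:t_h},\theta)$ to pass the limit through the expectation. Write $g(\theta) := \mathcal{L}(x_{0:t_h},u_{0:t_h},\theta)$ for the fixed state-control pair. By Assumption~\ref{assum:set_assumptions}, $\Theta$ is compact, so $g \in C_b(\Theta)$ (indeed $g\in C(\Theta)$). Therefore
\begin{equation*}
\big|\mathbb{E}_q[g]-\mathbb{E}_{p_T'}[g]\big|\to 0
\end{equation*}
for every such $g$ as soon as $p_T' \rightharpoonup q$ weakly. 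The whole argument therefore rests on establishing this weak convergence.

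First I would identify the target. Following the paper's standing identification (the sentence after Theorem~\ref{thm:svgd_convergence}), I take $q$ to be the task-based Boltzmann posterior $p'$ in \eqref{eq:log_posterior}, i.e. $q \propto e^{-V}$ with $V$ continuous by Lemma~\ref{lemma:V}. Second, I would use Theorem~\ref{thm:svgd_convergence}. Its bound \eqref{eq:convergence} decays like $1/N$ in the iteration count; I read $N$ there as the number of SVGD iterations, matching $T$ here. Since $\mathbb{D}_{KL}(p_0\|p')<\infty$ (because $V$ is bounded on compact $\Theta$ and $p_0>0$), the averaged KSD tends to $0$. Hence there is a subsequence, or the best iterate, along which $\mathrm{KSD}_{p'}(p_T')\to 0$. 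To obtain the full sequence I would either restrict to the averaged or best iterates, or invoke monotone decrease of the KL along the flow.

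Third, I would convert KSD convergence into weak convergence. On a compact domain with a universal kernel, $\mathcal{P}(\Theta)$ is weakly sequentially compact by Prokhorov's theorem. Take any weak limit point $\mu$ of $p_T'$. Then $\mathrm{KSD}_{p'}(\mu)=0$, since the Stein feature map $\theta\mapsto \mathcal{A}_{p'}k(\theta,\cdot)$ is continuous and bounded under Assumption~\ref{assum:lagrangian}, which bounds $\nabla_\theta\mathcal{L}$ and hence $\nabla V$ up to the prior term. The KSD is therefore weakly lower semicontinuous, indeed continuous. Universality of $k$ then gives that $\mathrm{KSD}_{p'}(\mu)=0$ implies $\mu = p'$. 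Since every limit point equals $p'=q$, the whole (sub)sequence converges weakly to $q$. Finally, choosing $T$ large enough that $|\mathbb{E}_q[g]-\mathbb{E}_{p_T'}[g]|\le\varepsilon$ completes the argument.

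The main obstacle is the third step, namely that vanishing KSD characterizes the target. This requires $\nabla \log p'$ to be well defined, so I need differentiability of $\log p_0$, which I would add as an assumption. It also requires boundary terms in Stein's identity to vanish on the compact $\Theta$. My first remedy would be to choose a kernel that vanishes on $\partial\Theta$, or to treat $\Theta$ as a torus, and then cite the known result that KSD with a $C_0$-universal kernel separates measures. A secondary gap is the distinction between empirical $N$-particle measures and the mean-field limit to which Theorem~\ref{thm:svgd_convergence} applies. I would handle this by a triangle inequality, adding a term controlled by propagation of chaos for large particle counts, so that $\varepsilon/2$ is spent on each piece.
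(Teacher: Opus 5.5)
Your outer structure matches the paper's proof. You establish $p_T'\Rightarrow q$, then pass to the limit in $\mathbb{E}[\mathcal{L}(\cdot,\theta)]$ because $\theta\mapsto\mathcal{L}$ is bounded and continuous; the paper cites the Portmanteau theorem for exactly this. The difference lies entirely in the weak-convergence step. The paper asserts it in one line from universality of $k$ and Theorem~\ref{thm:svgd_convergence}. You instead derive it:
\begin{itemize}
\item set $q=p'$;
\item extract vanishing KSD from \eqref{eq:convergence};
\item use compactness of $\mathcal{P}(\Theta)$ and weak continuity of the KSD to show every limit point has zero KSD;
\item use that zero KSD characterizes $p'$.
\end{itemize}
Your route needs extra hypotheses, but it is the one that can actually be made rigorous. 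Theorem~\ref{thm:svgd_convergence} only controls a Stein discrepancy to $p'$, not a weak distance to $q$. On a bounded $\Theta$ with, e.g., an RBF kernel, the boundary terms in Stein's identity need not vanish. Moreover, for a fixed particle count $N$ the empirical measure has at most $N$ atoms, so it cannot converge weakly to the absolutely continuous $p'$ at all. That is why your step of fixing $T$ first and then taking $N$ large via propagation of chaos is necessary. It is also why the paper's one-line claim $p_T'\Rightarrow q$ cannot hold as written for fixed $N$. In effect you prove a slightly modified statement: $q=p'$, extra smoothness and boundary hypotheses, and $N$ chosen depending on $\varepsilon$. That is what a rigorous version of this theorem requires.

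Two minor fixes to your write-up. First, multiplying \eqref{eq:convergence} by the iteration count gives $\sum_i \mathrm{KSD}_{p'}(\mu_i)\le \mathbb{D}_{KL}(p_0\|p')/c_\kappa$ uniformly. So the KSD is summable and tends to zero along the full sequence; you need no subsequence or best-iterate device, although the existential form of the claim would tolerate one. Second, weak continuity (or even lower semicontinuity) of the KSD needs $\nabla V$ to be continuous. Assumption~\ref{assum:lagrangian} only bounds $\nabla_\theta\mathcal{L}$. Either take continuity from the smoothness conditions underlying Theorem~\ref{thm:svgd_convergence}, or state a $C^1$ hypothesis alongside your differentiability assumption on $\log p_0$.
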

        \begin{proof}
            Since the Stein kernel $k$ is universal on compact $\Theta$, Theorem \ref{thm:svgd_convergence} shows convergence on the associated empirical measures,
            \begin{equation}
                p_T' \Rightarrow q \ \textrm{ as } \ T \rightarrow \infty.
            \end{equation}
            For a fixed trajectory $(x_{0:t_h},u_{0:t_h})$, we define the following function,
            \begin{equation}
                f(\theta) := \mathcal{L}(x_{0:t_h},u_{0:t_h},\theta).
            \end{equation}
            By Assumption \ref{assum:lagrangian}, $f$ is bounded and continuous on $\Theta$.
            Therefore, invoking Portmanteau Theorem \cite{billing},
            \begin{equation}
                \int_\Theta f(\theta) p'_T(\theta) d\theta \rightarrow \int_\Theta f(\theta) q(\theta) d\theta \ \textrm{ as } \ T \rightarrow \infty
            \end{equation}
            and equivalently,
            \begin{equation}
                \mathbb{E}_{\theta \sim p_T'(\theta)} \big[ \mathcal{L}(\cdot,\theta)\big] \rightarrow \mathbb{E}_{\theta \sim q(\theta)} \big[ \mathcal{L}(\cdot,\theta)\big].
            \end{equation}
            Hence,
            \begin{equation}
                \Big| \mathbb{E}_{\theta \sim q(\theta)} \big[ \mathcal{L}(\cdot,\theta)\big] - \mathbb{E}_{\theta \sim p_T'(\theta)} \big[ \mathcal{L}(\cdot,\theta)\big] \Big| \rightarrow 0 \ \textrm{ as } \ T \rightarrow \infty.
            \end{equation}
            By the definition of convergence of a sequence, for any $\varepsilon > 0$, $\exists T \in \mathbb{Z}_+$,
            \begin{equation}
                \Big| \mathbb{E}_{\theta \sim q(\theta)} \big[ \mathcal{L}(\cdot,\theta)\big] - \mathbb{E}_{\theta \sim p_T'(\theta)} \big[ \mathcal{L}(\cdot,\theta)\big] \Big| \leq \varepsilon
            \end{equation}
            thus completing the proof.
        \end{proof}

        \begin{prop}[Performance per-cycle upper bound]
            \label{prop:ce_bound}
            Let $\gamma \in [0,1]$ in \eqref{eq:exp_approx}, and let Assumptions \ref{assum:set_assumptions} and \ref{assum:lagrangian} hold with $\Theta$ convex, so that $\mathcal{L}(\tau,\cdot)$ is $\bar{\epsilon}$-Lipschitz on $\Theta$ for every feasible $\tau$. 
            Let $p'$ denote the empirical Stein particle measure with mean $\bar{\theta}$ as in \eqref{eq:particle_avg}, let $\tau^{\star}$ denote the minimizer of \eqref{eq:exp_approx}, and let $\hat{\tau}(\hat{\theta})$ denote the certainty-equivalent MPC solution for an arbitrary fixed $\hat{\theta} \in \Theta$ over the same feasible set. Then,
            \begin{equation}
                \begin{split}
                    \mathcal{L}(\tau^{\star}, \theta^{*})
                    &\le \mathcal{L}\big(\hat{\tau}(\hat{\theta}), \theta^{*}\big)
                    + 2\bar{\epsilon}\rho, \\
                    \textrm{where }\rho &:= \mathbb{E}_{\theta\sim p'}\big[\|\theta - \theta^{*}\|\big]
                    \le \mathrm{diam}(\Theta).
                \end{split}
            \end{equation}
        \end{prop}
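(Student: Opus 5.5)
The plan is a uniform approximation argument. I will show that the surrogate objective minimized in \eqref{eq:exp_approx} is within $\bar{\epsilon}\rho$ of the true-parameter Lagrangian $\mathcal{L}(\cdot,\theta^*)$ uniformly over the feasible set. The bound then follows from the optimality of $\tau^\star$, with one $\bar{\epsilon}\rho$ lost at each end.

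First I make the surrogate explicit. With $\theta^*$ replaced by the particle mean $\bar\theta$ as in \eqref{eq:particle_avg}, the objective in \eqref{eq:exp_approx} is
\begin{equation*}
F(\tau) := \mathcal{L}(\tau,\bar\theta) + \gamma\,\frac{1}{N}\sum_{i=1}^N\big(\mathcal{L}(\tau,\theta^i)-\mathcal{L}(\tau,\bar\theta)\big) = (1-\gamma)\,\mathcal{L}(\tau,\bar\theta) + \gamma\,\mathbb{E}_{\theta\sim p'}\big[\mathcal{L}(\tau,\theta)\big].
\end{equation*}
Since $\gamma\in[0,1]$, this is a convex combination. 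Next I justify the Lipschitz property claimed in the hypothesis. Because $\Theta$ is convex, the segment between any two points of $\Theta$ stays in $\Theta$. The mean value theorem together with the gradient bound of Assumption~\ref{assum:lagrangian} then gives $|\mathcal{L}(\tau,\theta)-\mathcal{L}(\tau,\theta')|\le\bar{\epsilon}\|\theta-\theta'\|$. Convexity also guarantees $\bar\theta\in\Theta$, so the Lipschitz bound applies at $\bar\theta$; this is the point where convexity is genuinely needed.

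With this in hand I bound the two pieces of $F$ separately. For the first, Jensen's inequality gives
\begin{equation*}
|\mathcal{L}(\tau,\bar\theta)-\mathcal{L}(\tau,\theta^*)| \le \bar{\epsilon}\,\|\bar\theta-\theta^*\| \le \bar{\epsilon}\,\mathbb{E}_{p'}\|\theta-\theta^*\| = \bar{\epsilon}\rho.
\end{equation*}
For the second, pulling the absolute value inside the expectation gives
\begin{equation*}
|\mathbb{E}_{p'}[\mathcal{L}(\tau,\theta)]-\mathcal{L}(\tau,\theta^*)| \le \bar{\epsilon}\rho.
\end{equation*}
Taking the convex combination yields $\sup_\tau |F(\tau)-\mathcal{L}(\tau,\theta^*)|\le\bar{\epsilon}\rho$. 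Then, since $\tau^\star$ minimizes $F$ and $\hat\tau(\hat\theta)$ lies in the same feasible set, I chain
\begin{equation*}
\mathcal{L}(\tau^\star,\theta^*) \le F(\tau^\star)+\bar{\epsilon}\rho \le F(\hat\tau)+\bar{\epsilon}\rho \le \mathcal{L}(\hat\tau,\theta^*)+2\bar{\epsilon}\rho.
\end{equation*}
Finally, because the particles and $\theta^*$ all lie in $\Theta$, each $\|\theta-\theta^*\|\le\mathrm{diam}(\Theta)$, and hence $\rho\le\mathrm{diam}(\Theta)$. Compactness of $\Theta$ from Assumption~\ref{assum:set_assumptions} makes this diameter finite.

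The main obstacle I expect is interpretive rather than technical. I must fix which reference parameter appears in the surrogate objective: $\bar\theta$ from \eqref{eq:particle_avg}, or the literal $\theta^*$. If $\theta^*$ is used literally, the first term incurs no error and the same chain still holds, only with a smaller constant. I also need the existence of the minimizer $\tau^\star$, which I will take as given, as the statement does, or obtain by arguing exactly as in Proposition~\ref{prop:existence}.
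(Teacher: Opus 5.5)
Your proposal is correct and follows the paper's proof essentially step for step: the same convex-combination rewriting of \eqref{eq:exp_approx} with $\bar\theta$, the Lipschitz-plus-Jensen bound $|F(\tau)-\mathcal{L}(\tau,\theta^*)|\le\bar{\epsilon}\rho$, and the same three-step chain through the optimality of $\tau^\star$. You also spell out two details the paper leaves implicit: convexity places $\bar\theta$ in $\Theta$, and compactness makes $\mathrm{diam}(\Theta)$ finite.
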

        \begin{proof}
            The objective of \eqref{eq:exp_approx} can be rewritten as
            \begin{equation}
                F(\tau) = (1-\gamma)\,\mathcal{L}(\tau,\bar{\theta}) + \frac{\gamma}{N}\sum_{i=1}^{N}\mathcal{L}(\tau,\theta^{i}).
            \end{equation}
            By the Lipschitz property, $|\mathcal{L}(\tau,\theta^{i}) - \mathcal{L}(\tau,\theta^{*})| \le \bar{\epsilon}\|\theta^{i} - \theta^{*}\|$, and by Jensen's inequality $\|\bar{\theta} - \theta^{*}\| \le \rho$. 
            Hence,
            \begin{equation}
                |F(\tau) - \mathcal{L}(\tau,\theta^{*})| \le \bar{\epsilon}\rho
            \end{equation} 
            for feasible $\tau$ and $\gamma \in [0,1]$. Since $\hat{\tau}(\hat{\theta})$ is feasible and $\tau^{\star}$ minimizes $F$,
            \begin{equation}
                \mathcal{L}(\tau^{\star},\theta^{*})
                \le F(\tau^{\star}) + \bar{\epsilon}\rho
                \le F\big(\hat{\tau}(\hat{\theta})\big) + \bar{\epsilon}\rho
                \le \mathcal{L}\big(\hat{\tau}(\hat{\theta}),\theta^{*}\big)
                + 2\bar{\epsilon}\rho.
            \end{equation}
        \end{proof} 
        \begin{remark}
            Per planning cycle, the controller performs no worse than certainty-equivalent MPC with any, possibly incorrect, $\hat{\theta}$, up to slack $2\bar{\epsilon}\rho$ governed by posterior concentration around $\theta^*$; taking $\hat{\theta} = \theta^*$ shows the cost approaches that of true-parameter MPC as $\rho \to 0$. 
            The bound holds at every cycle, hence recursively along the receding-horizon execution.
        \end{remark}
        
    \begin{table}
        \centering
        \footnotesize
        \setlength{\tabcolsep}{4pt}
        
        \begin{tabular}{lcc|cc}
            \toprule
            & \multicolumn{2}{c}{\textit{Cartpole Swing-Up}} 
            & \multicolumn{2}{c}{\textit{Rocket Landing}} \\
            \cmidrule(lr){2-3} \cmidrule(lr){4-5}
            \textbf{Method} 
            & \textbf{Success (\%)} & \textbf{Time (s)} 
            & \textbf{Success (\%)} & \textbf{Time (s)} \\
            \midrule
            \textbf{Ours} 
            & \textbf{100.0} & \textbf{10.24 $\pm$ 2.57} 
            & \textbf{90.63}  & 5.70 $\pm$ 2.53 \\
            EMPPI 
            & 93.75 & 24.47 $\pm$ 13.82 
            & 68.75 & \textbf{5.21 $\pm$ 2.54} \\
            MPC   
            & 93.75 & 23.54 $\pm$ 13.84 
            & 56.25 & 5.52 $\pm$ 2.81 \\
            DRO   
            & 78.13 & 23.34 $\pm$ 13.45 
            & 68.75 & 5.25 $\pm$ 3.38 \\
            \bottomrule
        \end{tabular}
        
        \caption{Performance comparison on cartpole swing-up and rocket landing tasks.
        Our method consistently achieves the highest success rates on both problems, 
        while also exhibiting substantially faster convergence on the cartpole task ($\approx2\times$ relative speedup), at the occasional slight cost of increased completion time.}
        \label{tab:control_results}
        \vspace{-2em}
    \end{table}
\section{Results}\label{sec:results}

    We evaluate the proposed controller on three nonlinear control problems with latent parameters under broad uncertainty with distinct tasks:
    \begin{enumerate}
        \item \textbf{Cartpole Swingup.} Swing the pole upright from a downward-oriented initial state.
        \item \textbf{Rocket Landing.} Navigate a 2D rocket to a landing pad.
        \item \textbf{Autonomous Racing.} Complete a lap of a 2D track as quickly as possible.
    \end{enumerate}
    All control problems are given the following general stage and terminal cost structure based on \eqref{eq:task_opt},
    \begin{equation}\label{eq:specified_stage_terminal_costs}
        \begin{split}
            \ell(x_t,u_t,\theta) &= (x_t - x_{des})^\top \mathbf{Q} (x_t - x_{des})
            + u_t^\top \mathbf{R} u_t\\
            m(x_{t_h}, u_{t_h}, \theta) &= (x_{t_h} - x_{des})^{\top}\mathbf{Q_f}(x_{t_h} - x_{des})
            + r(\cdot, \theta)
        \end{split}
    \end{equation}
    where $\mathbf{Q},\mathbf{R},\mathbf{Q_f} \succeq 0$, and $r(\cdot)$ is a problem-specific additional terminal cost.
    Note $r(\cdot)=0$ unless otherwise specified.

    \begin{table*}[t]
        \centering
        \setlength{\tabcolsep}{3pt}
        \begin{tabular}{l c c c}
        \toprule
         & Cartpole & Rocket & Racing \\
        \midrule
        $(n_x,\, n_u,\, n_\theta)$ & $(4,1,2)$ & $(6,2,3)$ & $(5,2,2)$ \\
        $\theta$ & $[m_{\mathrm{pole}}, \ell_{\mathrm{pole}}]$
                 & $[m_{\mathrm{r}}, I_{\mathrm{r}}, \ell_{\mathrm{COM}}]$
                 & $[m_{\mathrm{car}}, I_{\mathrm{car}}]$ \\
        $\theta^{*}$ & $[0.5,\, 0.75]$ & $[0.1,\, 0.01,\, 0.7]$ & $[0.1,\, 0.01]$ \\
        $(\theta_{\min},\theta_{\max})$ & $([0.3,\, 0.3], [1.0,\, 1.0])$ & $([0.05,\, 0.005,\, 0.05], [5.0,\, 2.0,\, 1.0])$ & $([0.05,\, 10^{-5}],[0.3,\, 0.5])$ \\
        $\mathbf{x}_0$ & $[0,0,0,0]$ & $[0, 0.5, 0, 0, 0, 0]$ & $[-2.5, -2, 0, 0, 0]$ \\
        $\mathbf{x}_{\mathrm{des}}$ & $[0, \pi, 0, 0]$ & $[0.5, 0, 0, 0, 0, 0]$ & ref.\ spline \\
        $t_h$ (s) & $0.4$ & $0.15$ & $0.15$ \\
        $\alpha$ & $10^{-3}$ & $10^{-3}$ & $10^{-2}$ \\
        $\mathbf{Q}$ & $\mathrm{diag}([0.01, 2.0, 0.01, 0.15])$ & $\mathrm{diag}([5.0, 5.0, 5.0, 0.001, 0.001, 0.01])$ & $\mathrm{diag}([5.0, 5.0, 0.001, 0.1, 0.01])$ \\
        $\mathbf{R}$ & $0.0001$ & $\mathrm{diag}([0.0001, 0.0001])$ & $\mathrm{diag}([0.1, 0.1])$ \\
        $\mathbf{Q_f}$ & $\mathrm{diag}([0.001, 0.001, 0.001, 2.0])$ & $\mathrm{diag}([5.0, 5.0, 5.0, 0.001, 0.001, 0.01])$ & $\mathrm{diag}([5.0, 5.0, 0.001, 0.1, 0.01])$ \\
        \bottomrule
        \end{tabular}
        \caption{Implementation details. All ensemble-based experiments use
        $N = 5$ particles, $\gamma = 0.5$,
        and uniform priors
        $\{\theta^i\}_{i=1}^{N} \sim \mathcal{U}(\theta_{\min}, \theta_{\max})$.}
        \label{tab:impl}
        \vspace{-1.5em}
    \end{table*}


        We compare our Stein Variational uncertainty-adaptive Model Predictive Control to the following three comparative baselines:
        \begin{enumerate}
            \item \textbf{Ensemble MPC (EMPPI)
            \cite{Abraham_2020,lowrey2019planonlinelearnoffline}.} Samples control sequences and propagates them across a parameter ensemble via task-agnostic (uniform random) sampling.
            \item \textbf{Standard DRO \cite{kuhn2025distributionallyrobustoptimization,liu2025datadrivendistributionallyrobustoptimal}.} Optimizes the ambiguity-set objective \eqref{eq:minmax_original} against a worst-case parameter distribution.
            \item \textbf{Standard MPC.} Solves the finite-horizon problem with the nominal parameter, ignoring parametric uncertainty.
        \end{enumerate}
        Each example is evaluated over $32$ trials with different seeds initializing the particles $\{\theta^i\}_{i=1}^{N} \sim \mathcal{U}(\theta_{\min}, \theta_{\max})$. Unless otherwise stated, we
        use the RBF kernel $k(\theta,\hat{\theta}) = \exp(-\|\theta-\hat{\theta}\|^2/h)$ with bandwidth $h = 1$.

        \emph{Computational cost.} 
        Per planning cycle, our method requires $N$ parallel trajectory rollouts to evaluate $\delta\mathcal{L}(\tau,\theta^i)$ per SVGD update, each requiring $\mathcal{O}(N^2 n_\theta)$ kernel evaluations, where $n_\theta=|\theta|$, for a total per-cycle complexity of $\mathcal{O}(N t_h + N^2 n_\theta)$, compared with $\mathcal{O}(t_h)$ for nominal MPC and $\mathcal{O}(N t_h)$ for EMPPI. The rollouts dominate and parallelize identically to EMPPI's ensemble, so the SVGD loop adds only marginal overhead for small $N$.
        Measured per-cycle solve times are reported in
        Table~\ref{tab:timing}, and all methods remain within the receding-horizon
        replanning budget.
        
        \begin{table}[t]
            \centering
            \begin{tabular}{l c c c}
            \toprule
            Method & Cartpole & Rocket & Racing \\
            \midrule
            Ours  & $27.2 \pm 2.3$ & $24.8 \pm 2.6$ & $26.9 \pm 2.3$ \\
            EMPPI & $23.1 \pm 2.1$ & $21.2 \pm 0.5$ & $21.2 \pm 2.1$ \\
            MPC   & $19.5 \pm 1.8$ & $18.3 \pm 1.2$ & $18.1 \pm 1.2$ \\
            DRO   & $22.5 \pm 1.2$ & $20.6 \pm 2.1$ & $22.7 \pm 2.3$ \\
            \bottomrule
            \end{tabular}
            \caption{Per-cycle solve times (ms, mean $\pm$ std across all planning
            cycles of the 32 trials per task) measured on the GPU (NVIDIA GeForce
            RTX 3080).}
            \label{tab:timing}
            \vspace{-1.5em}
        \end{table}

    \subsection{Cartpole Swingup}

        \subsubsection{Problem Formulation}
        Let $\mathbf{x^{cp}}_t = [x_t, \varphi_t, v_t, \omega_t]^{\top} \in \mathbb{R}^4$ denote the cart position, the pole angle measured from the downward vertical, and their respective velocities, with force input $\mathbf{u^{cp}}_t \in \mathbb{R}$ along the translational axis. 
        Starting from rest with the pole hanging down, the objective is to swing the pole upright. 
        The uncertain parameters are the pole mass $m_\textrm{pole} \in \mathbb{R}$ and length $\ell_\textrm{pole} \in \mathbb{R}$, and all numerical parameters are listed in Table~\ref{tab:impl}.

        \subsubsection{Results}
        Fig.~\ref{fig:cartpole_example_traj} shows an example trajectory of a cartpole exploiting oscillatory motions until SVGD converges to an approximate posterior of the target to robustly stabilize the pole to the goal state. 
        Results across all $32$ trials are reported in Table~\ref{tab:control_results}.

        Our method achieves $\approx 2\times$ faster completion with consistently lower variance across all parameter initializations. 
        EMPPI's \emph{task-agnostic} sampling drives inefficient exploration, while DRO enforces uniformly \emph{worst-case} robustness irrespective of task relevance. 
        Stein-based posterior updates instead reshape the parameter distribution via the optimality gap, suppressing task-irrelevant parameter variations and amplifying performance-critical ones, eliminating unnecessary exploration.
        

    \begin{figure}
        \centering
        \vspace{0.2em}
        \includegraphics[width=0.8\linewidth]{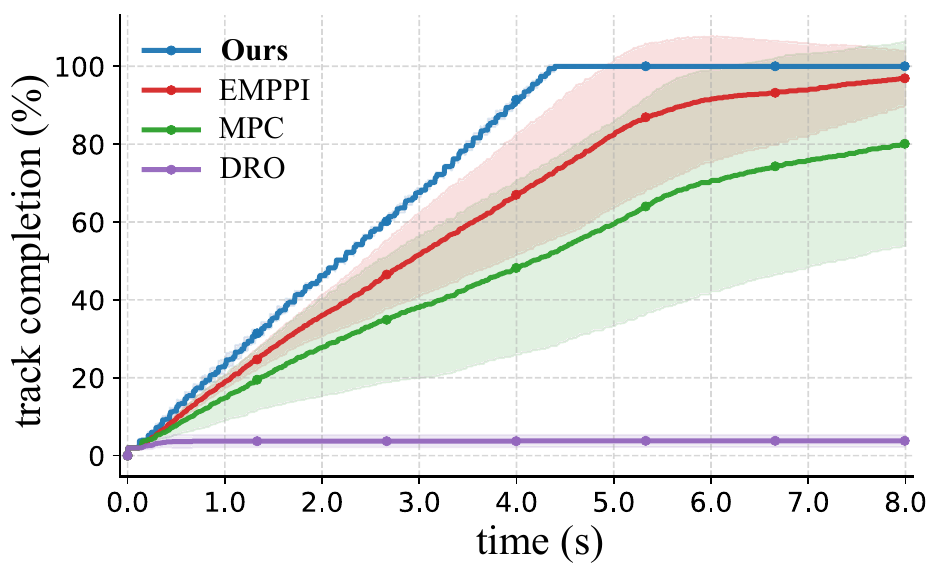}
        \caption{\textbf{Autonomous Racing Lap Time Completions. } We report track completion over time for the autonomous racing task under uncertainty in vehicle mass and inertia. 
        We achieve faster and more consistent progress by reasoning over task-sensitive regions of the parameter posterior that most affect control performance.} 
        \label{fig:race_completion_times}
        \vspace{-1.9em}
    \end{figure}
    \subsection{2D Rocket Landing}
        
        \subsubsection{Problem Formulation}
        Let $\mathbf{x^{r}}_t \in \mathcal{X}_t^{\textrm{rocket}}=\mathbb{R}^6$ be the rocket system state, defined as $\mathbf{x^{r}}_t = [x_t, y_t, \varphi_t, \dot{x}_t, \dot{y}_t, \omega_t]^\top$,
        where $x_t$ and $y_t$ are the translational world coordinates of the system, $\varphi_t$ is the measured rocket angle with respect to the world vertical axis (with world frame positioned at `*' on the landing pad shown in Fig. \ref{fig:rocket_example_traj}), and $\dot{x}_t$, $\dot{y}_t$, and $\omega_t$ are translational and angular velocities, respectively.
        Let $\mathbf{u^{r}}_t \in \mathcal{U}_{\textrm{rocket}}=\mathbb{R}^2$ be the thrust and gimbal control of the rocket, where the thrust is exerted along the gimbal axis.
        The goal state is placed a lateral distance from the rocket, so the system cannot trivially descend vertically and must instead execute careful lateral motion to land safely. The uncertain parameters are the rocket mass $m_\textrm{rocket}$, inertia $I_\textrm{rocket}$, and center-of-mass location $\ell_{\mathrm{COM}}$ measured along the body axis, with $\theta_{\max,3} = h_{\mathrm{rocket}} = 1.0$ the rocket height; all numerical parameters are listed in Table~\ref{tab:impl}.

        \subsubsection{Results}
        Fig.~\ref{fig:rocket_example_traj} shows an example trajectory, where the task succeeds only if the rocket lands within a safe tolerance of the pad with its body axis upright. 
        Since the goal is a lateral distance from the rocket, the system must move laterally without tipping the high-positioned center of mass, and the resulting behavior is a gradual gimbal-thrust forcing that translates the rocket without capsizing it. 
        Results across $32$ trials are reported in Table~\ref{tab:control_results}.
        \begin{table}
            \centering
            \footnotesize
            \setlength{\tabcolsep}{6pt}
            \renewcommand{\arraystretch}{1.1}
            \begin{tabular}{lcc}
                \toprule
                \textbf{Method} & \textbf{Success (\%)} & \textbf{Time (s)} \\
                \midrule
                IMQ & \textbf{95.0} & $7.63 \pm 3.41$ \\
                RBF          & 90.0          & \textbf{4.55 $\pm$ 2.19} \\
                $k(\cdot,\hat\theta) = 1$     & 80.0          & $6.48 \pm 3.55$ \\
                \bottomrule
            \end{tabular}
            \caption{\textbf{Kernel Ablation Study. }
            Comparison of our method for the rocket landing system using RBF, IMQ, and $k(\cdot,\hat{\theta})=1$ kernels.
            The IMQ kernel achieves the highest success rate with slightly slower completion due to the kernel's prevention of mode collapse and preservation of particle diversity for more reliable inference.}
            \label{tab:kernel_comparison}
            \vspace{-2.5em}
        \end{table}

        While EMPPI attains the lowest average completion time, it also has the lowest success rate among all methods. 
        Our method achieves the highest reliability at near-negligible added time. 
        By `probing' the rocket through subtle tilts that stop short of tipping, our approach generates the exploration SVGD needs to approximate the target posterior via task-sensitive inference.
        \subsection{Autonomous Racing}

        \subsubsection{Problem Formulation}
        Let $\mathbf{x^{c}}_t \in \mathcal{X}_t^{\textrm{car}}=\mathbb{R}^5$ be the vehicle system state, defined as $\mathbf{x^{c}}_t = [x_t, y_t, \varphi_t, v_t, \omega_t]^\top$,
        where $x_t$ and $y_t$ are the translational world coordinates of the system (with world frame oriented at the center of the racetrack), $\varphi_t$ is the measured vehicle angle with respect to the body forward axis, $v_t$ is the forward directional velocity with respect to the body axis, and $\omega_t$is the angular velocities, respectively.
        Let $\mathbf{u^{c}}_t \in \mathcal{U}_{\textrm{car}}=\mathbb{R}^2$ be the throttle and steering control of the vehicle.
        We define the dimensions of the racetrack as a track length of $5.0$ and a track radius of $2.0$. 
        We initialize the vehicle state in world coordinates at the starting line of the racetrack, $\mathbf{x^{c}}_0 = [-2.5, -2.0, 0.0, 0.0, 0.0]^\top$,
        with the objective of completing a single lap along the track as quickly as possible, where $\mathbf{x^{c}}_\textrm{des}$ is a reference spline along the central spline of the racetrack.
        To promote progress along the track, we add the terminal reward
        \begin{equation}
            r(x_{0:t_h}, u_{0:t_h}, \theta)
            = \sum_{j =1}^{n_x} \frac{1}
            {\big[\mathbf{x^c}_{t_h} - \mathbf{x^c}_{0}\big]_j + \varepsilon_r},
        \end{equation}
        where $[\cdot]_j$ denotes the $j$-th entry, and $\varepsilon_r = 0.001$ avoids
        division by zero.
        The uncertain parameters are the vehicle mass $m_\textrm{car}$ and inertia $I_\textrm{car}$. 
        Additional details are listed in Table~\ref{tab:impl}.

        \subsubsection{Results}
        Fig.~\ref{fig:abstract} shows example trajectories of our controller adapting to the target parameter posterior to produce dynamically consistent trajectories that track the reference spline, maintaining high-speed traversal with minimal corrective action and achieving the fastest lap time of $4.408$~s. 
        As shown in Fig.~\ref{fig:race_completion_times}, our method completes the track faster and with reduced variance relative to EMPPI and MPC, while standard DRO fails to reliably complete the track.
        EMPPI's task-agnostic sampling induces redundant exploration, nominal MPC neglects parameter uncertainty, and DRO's worst-case design yields overly conservative, unstable behavior. 
        Adapting to the task-dependent posterior via the optimality gap instead yields trajectories that are simultaneously aggressive, stable, and robust, achieving faster laps.
    \subsection{Kernel Ablation Study}
        To demonstrate kernel sensitivity to task performance, Table~\ref{tab:kernel_comparison} evaluates our approach on the rocket landing problem over three kernels: the RBF kernel used above, the Inverse Multi-Quadratic (IMQ) kernel \cite{gorham2020measuringsamplequalitykernels}, $k(\theta,\hat{\theta}) = (\psi^2 + \|\theta - \hat{\theta}\|^2)^{-\zeta}$ with bandwidth $\psi$ and decay factor $\zeta$, and the constant kernel $k(\cdot,\hat\theta)=1$, which reduces SVGD to parallel gradient descent over the parameters. 
        The IMQ kernel yields the most reliable control at the cost of slightly longer completion times.
        RBF's exponential decay limits long-range particle interactions and can induce clustering or degeneracy, whereas IMQ's long-range repulsion preserves particle diversity and prevents mode collapse. 
        We also found that overly large step sizes $\alpha \geq 0.1$ destabilize the particle flow.
        
        Although our experiments involve $n_\theta \le 3$ uncertain parameters, the per-iteration cost of SVGD scales linearly in $n_\theta$ and quadratically in $N$, and the principal high-dimensional failure mode (particle collapse under RBF kernels) is mitigated by IMQ's repulsion (Table~\ref{tab:kernel_comparison}) and further alleviated by structured or message-passing Stein updates \cite{zhuo2018messagepassingsteinvariational}, suggesting a viable path toward higher-dimensional parameter spaces.
\section{Conclusions}
    We introduce a Stein variational uncertainty-adaptive model predictive controller for nonlinear systems with latent parameter uncertainty. 
    By constructing a task-dependent posterior and coupling Stein variational inference with control synthesis, we compute task-sensitive robust controllers without the conservatism of worst-case uncertainty design, achieving improved performance and robustness to uncertainty over task-agnostic sampling and overly conservative classical approaches.

\section{Acknowledgements}
    This work is supported by the National Science Foundation under award NSF FRR 2238066. 
    Any opinions, findings, and conclusions or recommendations expressed in this material are those of the authors and do not necessarily reflect the views of the National Science Foundation.


\bibliographystyle{IEEEtran}
\bibliography{main}
\end{document}